  \providecommand\BibTeX{{%
    \normalfont B\kern-0.5em{\scshape i\kern-0.25em b}\kern-0.8em\TeX}}}
\def\argmin{\mathop{\rm arg\,min}}
\newtheorem{theorem}{Theorem}
\newtheorem{remark}{Remark}
\newtheorem{definition}{Definition}
\newtheorem{assum}{Assumption}
\begin{document}

%%
%% The "title" command has an optional parameter,
%% allowing the author to define a "short title" to be used in page headers.
\title[Bidding via clustering ads intentions]{An Efficient Group-based Search Engine Marketing System for E-Commerce}

%%
%% The "author" command and its associated commands are used to define
%% the authors and their affiliations.
%% Of note is the shared affiliation of the first two authors, and the
%% "authornote" and "authornotemark" commands
%% used to denote shared contribution to the research.

\author{Cheng Jie, Da Xu, Zigeng Wang, Lu Wang, Wei Shen}
\affiliation{%
  \institution{Walmart Labs}
  \city{Sunnyvale}
  \state{California}
  \country{USA}
}
\email{{cheng.jie, da.xu, zigeng.wang0, lu.wang3, wei.shen}@walmart.com}

% \author{Cheng Jie}
% \affiliation{%
%   \institution{Walmart Labs}
%   \city{Sunnyvale}
%   \state{California}
%   \country{USA}
% }
% \email{Cheng.jie@walmart.com}

% \author{Da Xu}
% \affiliation{%
%   \institution{Walmart Labs}
%   \city{Sunnyvale}
%   \state{California}
%   \country{USA}
% }
% \email{Da.Xu@walmart.com}

% \author{Zigeng Wang}
% \affiliation{%
%   \institution{Walmart Labs}
%   \city{Sunnyvale}
%   \state{California}
%   \country{USA}
% }
% \email{zigeng.wang0@walmart.com}

% \author{Lu Wang}
% \affiliation{%
%   \institution{Walmart Labs}
%   \city{Sunnyvale}
%   \state{California}
%   \country{USA}
% }
% \email{lu.wang3@walmart.com}

%%
\renewcommand{\shortauthors}{Cheng Jie and Da Xu, et al.}

\begin{abstract}
With the increasing scale of the search-engine marketing, designing an efficient bidding system is becoming paramount important for the success of e-commerce company. The critical challenges faced by a modern industrial-level bidding system include: 1. the catalog is huge and the relevant bidding features are of high sparsity; 2. the large volume of bidding requests induces significant computation burden to both the offline and online serving. In this paper, we introduce the development and deployment process of the bidding system for search engine marketing on Walmart.com, which successfully handles hundreds of millions of biddings each day. We show and discuss the real-world performances of state-of-the-art deep learning methods, and reveal how we find their as the production-optimal solutions.
\end{abstract}

\keywords{Clustering, Intention embedding, SEM bidding}

\maketitle

\section{Introduction}
In the modern era, online advertising has become a primary channel to deliver promotional marketing messages to customers. Among the various forms of online advertising, \emph{search engine marketing} (SEM) promotes business by showing and recommending advertisements on search-result pages. In particular, \emph{sponsored search auctions} contribute significantly to online advertising revenue as search results often have more prominent exposure.

The business impact of SEM attracts increasing attention from both academia and industry, specifically, the problems are appeal to the domains of economics and computer science\cite{Zhang2021Form1I}. 
Over the years, a large body of literature studies the constrained bidding optimization model, which maximizes business objectives under the prefixed spending limit. For instance, 
\cite{Feldman2007BudgetOI} and \cite{Borgs2007DynamicsOB} establish SEM bidding models for a single advertiser as constrained optimization problems in a deterministic setting where the advertisers' position, clicks, and the cost associated with a bid are known a priori. In comparison, SEM bidding as an optimization problem under the stochastic setting has been studied in \cite{Abhishek2012OptimalBI}. 
%Game-theoretic structures of SEM have been studied by \cite{Aggarwal2009GeneralAM} and \cite{Brgers2008EquilibriumBI}, and both works aims to boost the welfare of all advertisers in search engine platforms. 
More recently, there is a emerging stream of work which are dedicated to formulating the SEM bidding optimization as a reinforcement learning based \cite{la2016cumulative} dynamic pricing problem\cite{zhu2021news}, \cite{zhu2021time}, \cite{zhu2021clustering} by incorporating the sequential behavior of SEM ads \cite{Shen2020ReinforcementMD}.

While the seminal works have established rigorous mathematical properties for SEM bidding, their optimization models are often too restrictive for practical implementation. In real-world productions, the high volume of candidate ads is a crucial factor that hinders the applicability of those methods. The two challenges that stem from the industrial scale of ads are: (1). the feedback data of most SEM ads are inevitably sparse due to the limited slots at search engine platforms, preventing an accurate and effective estimation of their performances; (2). complex bidding evaluations are prohibitive because the volume of ads is enormous, and the frequency of bidding operations is very high. 

\subsection*{SEM bidding through clustering ads intentions}
In this paper, we introduce a generic bidding framework targeting the above challenges. The SEM solution proposed in our paper is currently in production for the multi-million-scale ads bidding for Walmart's e-commerce business. As we demonstrate in detail later in section \ref{section:system_overview}, the solution of our system comprises of two critical components:
\begin{itemize}
    \item a deep-learning-based multi-stage predictive algorithm for predicting the performance of the advertisement through their multi-modality signals, including the user feedback data and the contextual features of ads;
    \item an optimization algorithm that assigns a bidding price for each ad, based on its performance forecast for the desired business objectives.
\end{itemize}
Toward our goals, we first redesign a Transformer-based \cite{Vaswani2017AttentionIA} deep-learning language model to extract vector representations of the ads, which captures the customer's intentions when landing on the ads page. The advantage is that we can now fully leverage the geometric characteristics of the representations to aggregate ads' information that would be sparse otherwise (detailed in Section \ref{section:embedding_clustering}). The multi-stage prediction algorithm, which then augments the grouping patterns of features via ads clustering,  further alleviates the sparsity issue of the features. In the meantime, the clustering-based solution improves the scalability of the second-stage optimization algorithm by significantly reducing the number of entities in the downstream evaluation of the bids.

We thoroughly examine the performance of the proposed solutions via both offline studies and online experiments in Section \ref{section: experiment}. As expected, the clustering step is essential for trading off the sparsity, accuracy, and scalability. 

The previous literature addresses the sparsity issue primarily by using the ads' ``keywords'' in addition to the feedback data \cite{Hillard2010TheSO}. 
However, using word tokens as a categorical feature can pose severe problems in building predictive models due to the high cardinality. Unlike \cite{Hillard2010TheSO}, our approach constructs continuous vector representations of ads and therefore avoids the tenuous work of dealing with massive word tokens. We point out that the ideas of clustering SEM ads have also been proposed to overcome the high computation demands \cite{Chen2013QueryCB, zhu2020high}. However, the clustering algorithms developed in the above work are based on the distributions of SEM ads' historical feedback data, thereby excluding those with sparse historical features, which is problematic for modern SEM applications.

\section{Background for SEM Bidding}
\label{section:system_overview}

We first introduce the underlying bidding model and system that powers Walmart's SEM business.

\subsection{SEM bidding model with single objective}
Suppose an advertiser wants to maximize its revenue under the budget $\mathcal{B}$, we denote $\mathcal{K}$ as the set of ads which the advertiser would like to bid on. The SEM bidding model can be formulated as the following constrained optimization problem:
\begin{align}\label{eq:optimization}
& \max_{\{b_k\}}  E[ \sum_{k \in \mathcal {K}} R_k(b_k)]\\
& s.t.  \sum_{k \in \mathcal{K}} E[ C_k(b_k) ] \leq \mathcal{B}
\end{align}
where $R_k$, $C_k$ denote the revenue and cost function of ad $k$ with regard to the bid $b_k$.   

Practically, the cardinality of $\mathcal{K}$ is huge given the large number of items on list of the eCommerce merchandise's website. The large volume of ads raises two challenges: 1. It becomes impossible to accurately predict all the revenue function $R_k(b_k)$ and cost function $C_k(b_k)$. 2. Solving the KKT condition in \ref{eq: kkt} becomes intractable given the size of the problem.

In order to address the practical challenges for solving the large scale random optimization problem under constraint of \ref{eq:optimization}, we proposed the following approaches so that the magnitude of the optimization problem can be reduced dramatically.

First, inspired by the previous literature(\cite{Mohammad2009clustering}, \cite{Chen2013QueryCB}, \cite{LIN20181}) and the observation that many SEM ads indeed have very close customer intentions, we propose a bidding scheme where the bid is determined on the level of ad groups instead of individual ad. To this end, the bidding scheme will cluster the ads into groups based on their embedded vectors of deep-learning customer intention model, and set up the optimization model at the level of ad group. Through some notation changes, the SEM bidding model on ad group level can be formulated as follows:  
\begin{align}\label{eq:optimization-group}
& \max_{\{b_g\}}  E[ \sum_{g \in \mathcal {G}} R_g(b_g)]\\
& s.t.  \sum_{g \in \mathcal{G}} E\left[
C_g(b_g)\right]\leq \mathcal{B},
\end{align}
where $\mathcal {G}$ stands for the set of ad groups determined through the clustering step, and $b_g$ is the bidding value assigned to all the ads at group $g$. Based on our practical work of ad embedding and clustering, the ad group level bidding model usually only has $3\% - 5\%$ scale of the individual level bidding model. Section 4 will give a detailed discussion on the ads embedding and clustering steps.   

With the SEM bidding model set up on the level of ad group, finding an optimal solution to the problem is still quite onerous given that the forms of expected revenue $E(R_g(b_g))$ and cost functions $E(C_g(b_g))$ are unknown. However, by adding some practically reasonable assumptions on the expected revenue and cost functions, the optimal solution on \ref{eq:optimization-group} can be achieved in a quite efficient approach. Before presenting the assumption used for solving the problem \ref{eq:optimization-group}, we introduce the expected click function on bid value as $E[CLK(b)]$, and propose the following definitions of RPS(revenue per spend) and RPC(revenue per click) below.

\begin{definition}\label{roi-definition}
The RPS, i.e, revenue per spend, is defined as the amount of revenue an ad is bringing in on one unit of spend. Mathematically, for an ad group $g$, its $RPS_g = \frac{E[R_g(b_g)]}{E[C_g(b_g)]}$.
\end{definition}

\begin{definition}\label{roi-definition}
The $RPC$, i.e, revenue per click, is defined as the amount of revenue an ad is bringing in on one click, for a given ad $g$, we denote its revenue per click as $RPC_g$.
\end{definition}

\begin{assum}\label{eq:revenue_cost_assum}
We assume that for a given ad group $g$, its revenue per click $RPC_g$ is insensitive to the change of bid value $b_g$. Furthermore, we assume that expected click is linear function on $b_g$, i.e., $E[CLK_g(b_g)] = c_g b_g$.  
\end{assum}

\begin{remark}
With most of search engine platforms now switching to first-price auction model, the bid price $b_g$ now is essentially the cost of a click. As a result, based on assumption \ref{eq:revenue_cost_assum}, $E[C_g(b_g)] = c_g b^2_g$ for any ad group $g$. 
\end{remark}

Under the assumption \ref{eq:revenue_cost_assum} of expected revenue cost function, we can state and prove the following theorem regarding the optimal condition of the SEM optimization problem \ref{eq:optimization}. 

\begin{theorem}\label{thm:optimal-condition}
Under assumption \ref{eq:revenue_cost_assum}, the optimal solution of the optimization problem \ref{eq:optimization-group} is achieved when the RPS(revenue per investment) is equal for $\forall k$. 
\end{theorem}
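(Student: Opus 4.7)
The plan is to apply the method of Lagrange multipliers to problem (\ref{eq:optimization-group}) after rewriting the objective and constraint in closed form using Assumption \ref{eq:revenue_cost_assum} together with the subsequent remark. From the invariance of $RPC_g$ in $b_g$ and the click linearity $E[CLK_g(b_g)] = c_g b_g$, the expected revenue becomes $E[R_g(b_g)] = RPC_g \cdot E[CLK_g(b_g)] = RPC_g c_g b_g$, while the remark already identifies $E[C_g(b_g)] = c_g b_g^2$. The group-level SEM problem therefore reduces to maximizing the linear functional $\sum_{g \in \mathcal{G}} RPC_g c_g b_g$ over the convex set $\{b_g \geq 0 : \sum_g c_g b_g^2 \leq \mathcal{B}\}$, to which Lagrangian methods apply directly.

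Next I would form the Lagrangian $L(\{b_g\}, \lambda) = \sum_g RPC_g c_g b_g - \lambda\bigl(\sum_g c_g b_g^2 - \mathcal{B}\bigr)$ and impose the stationarity condition $\partial L/\partial b_g = 0$, which yields $b_g^{*} = RPC_g/(2\lambda)$. Substituting into the definition of $RPS_g$ then gives $RPS_g = E[R_g(b_g^{*})]/E[C_g(b_g^{*})] = RPC_g/b_g^{*} = 2\lambda$, independent of $g$. The multiplier $\lambda$ is itself pinned down by the tight-budget equation $\sum_g c_g (b_g^{*})^2 = \mathcal{B}$, but the structural fact that $RPS_g$ takes the common value $2\lambda$ across all ad groups follows immediately from the stationarity expression.

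The main obstacle is not the algebra but rather promoting this single KKT candidate to a global maximizer. The proof would need to argue three things: (i) the budget must bind at any optimum, because otherwise some $b_g$ could be nudged upward to strictly increase revenue; (ii) the non-negativity constraints $b_g \geq 0$ are slack whenever $RPC_g > 0$, ensuring the stationary point lies in the interior of the orthant; and (iii) the linear objective attains its maximum on the strictly convex boundary $\sum_g c_g b_g^2 = \mathcal{B}$ at a unique point, so the single interior KKT candidate is automatically the global optimum. Chaining these observations together closes the gap between first-order stationarity and the equalization claim of the theorem.
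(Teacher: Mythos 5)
Your proposal is correct and follows essentially the same route as the paper: both derive the closed forms $E[R_g(b_g)] = c_g b_g \, RPC_g$ and $E[C_g(b_g)] = c_g b_g^2$ from Assumption \ref{eq:revenue_cost_assum}, form the Lagrangian, and read off from the stationarity condition that $RPS_g = RPC_g/b_g$ is constant (equal to $2\lambda$) across groups. The only difference is that you additionally sketch why the KKT candidate is the global maximizer (binding budget, linear objective over a convex feasible set), a sufficiency argument the paper omits but which strengthens rather than changes the proof.
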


\begin{proof}
The Lagrangian of \ref{eq:optimization-group} can be written as
\begin{align}
\mathcal{L} =  E[ \sum_{g \in \mathcal {G}} R_g(b_g)] - \lambda \{\mathcal{B} - \sum_{g \in \mathcal{G}} E\left[
C_g(b_g)\right] \},   
\end{align}
The KKT condition of \ref{eq:optimization-group} is
\begin{align} \label{eq: kkt}
    \forall g: \frac{d \mathcal{L}}{d b_g} = \frac{d}{d b_g} E[ \sum_{g \in \mathcal {G}} R_g(b_g)] - \lambda \frac{d}{d b_g} E[
C_g(b_g)] = 0, \lambda >= 0 \\
    \mathcal{B} - \sum_{g \in \mathcal{G}} E[
C_g(b_g)] >= 0
\end{align}
The KKT condition of \ref{eq: kkt} implies that an optimal solution exists when the following quotient 
\begin{align} \label{eq:sem-quotient}
\frac{d}{d b_g} E\left[ \sum_{g \in \mathcal {G}} R_g(b_g)\right]/\frac{d}{d b_g} E\left[
C_g(b_g)\right]
\end{align} 
takes the same value across $\forall g$. Under assumption \ref{eq:revenue_cost_assum}, we have 

\begin{align}
    E[R_g(b_g)] = E[CLK(b_g)] \cdot RPC_g = c_g b_g RPC_g \\
    E[C_g(b_g)] = E[CLK(b_g)] \cdot b_g = c_g b^2_g
\end{align}

\begin{align}\label{eq:kkt}
\frac{d}{d b_g} E\left[ \sum_{g \in \mathcal {G}} R_g(b_g)\right]/\frac{d}{d b_g} E\left[
C_g(b_g)\right] = \frac{RPC_g}{2b_g}
\end{align}
Notice that under assumption \ref{eq:revenue_cost_assum}, its revenue per spend $RPS_g = \frac{E(R_g(b_g))}{E(C_g(b_g))} = \frac{RPC_g}{b_g}$. Hence, the KKT condition \ref{eq:kkt} is equivalent to that $RPS_g$ are equal across $\forall g$.  
\end{proof}

Addition to the conclusion of theorem \ref{thm:optimal-condition}, recall that assumption \ref{eq:revenue_cost_assum} claims that $RPC_g$ are steady against $b_g$. Henceforth, the key step in optimizing \ref{eq:optimization-group} is predicting the revenue per click value for each ad group $g$. 

\begin{remark}
Note that the classical singular-ad bidding algorithm can be easily recovered by replacing the ad group $g$ with the single ad.     
\end{remark}

\begin{remark}
There exists scenarios that the business goal on SEM bidding system is pursuing a balanced optimal condition targeted at more than one objectives. In those cases, optimization function   \ref{eq:optimization-group} becomes a weighted sum of multiple expectations of different objectives. Similar assumptions and approaches as Assum \ref{eq:revenue_cost_assum} and thm \ref{thm:optimal-condition} can be applied to solve such optimization problem, which are skipped in this paper to reduce redundancy.   
\end{remark}

\begin{figure}[hbt]
    \centering
    \includegraphics[width=\linewidth]{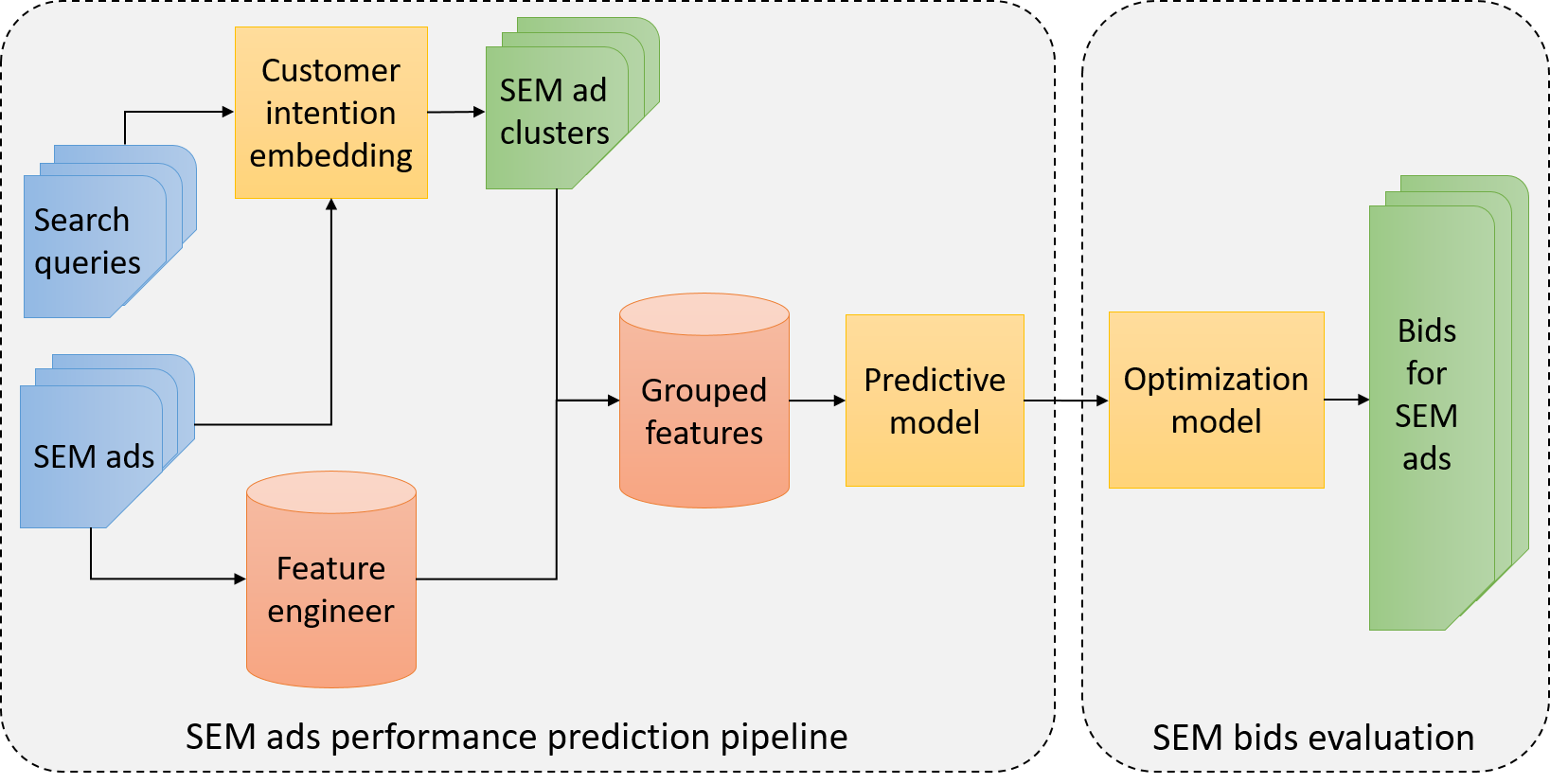}
    \caption{Overview of the infrastructure for providing SEM ads' bids by our approach.}
    \label{fig:sem_bidding_whole_sys}
\end{figure}

\subsection{SEM bidding system}
\label{sec:sem-bidding-system}
The results in the previous section suggest that the critical tasks for determining the bids of SEM ads is to accurately predict the revenue per click for each ad group $g$. In the sequel, we propose a design of the SEM ads bidding system illustrated in Fig \ref{fig:sem_bidding_whole_sys}. In Figure \ref{fig:sem_bidding_whole_sys}, the first task for obtaining the $RPC$ predictions is clustering the pool of SEM ads into ad groups. It consists of two steps: 1. building a representation learning model that encodes SEM ads into embeddings; 2. clustering SEM ads into ad groups.
After creating the SEM ad groups, the system will aggregate the features for ads within each ad group, and then train a predictive model to accurately forecast the $RPC_g$ for each ad group. We plug the $RPC_g$ back to the optimization problem and obtain the final bidding $b_g$ for each $g\in\mathcal{G}$ as $b_g = RPC_g/RPS_g$, where $RPS_g$ is known in advance. 

\section{Embedding and Clustering of SEM Ads}
\label{section:embedding_clustering}
The ad group level bidding model \ref{eq:optimization-group} requires that each ad cluster needs to represent an unique and exclusive customer purchase intention. Otherwise, if customer intentions of two ad groups overlap, the change of bid on one group might impact the impressions and conversions of another group, violating the bidding model assumption that each ad group behaves independently. To this end, we devise two steps in order to segment the SEM ads into mutually exclusive ad clusters in terms of customer intention: First, We build an customer intention representation model which can assign a vector representation of every ad. Second, based on the vector representations of each ad, we design a multi-stage clustering method in order to cluster large amount of ads into small to mid-sized ad groups.
  
\subsection{Customer intention embedding model}
% The customer intention of an SEM ad is defined as the integrated purchase intention (of the set of search queries) that leads to the clicked ads on the search engine. For example, an ad may appeal to customers who search for ``apple phone 8 case'' or ``iphone 9 case'', if their intentions are the case covers for various versions of iphone. If two ads share a large portion of clicked search queries, their customer intentions should be close to each other. Therefore, we design the customer intention model to reflect the co-click relations among the SEM ads. We propose the following metric to capture such intention.

In this section, we walk through the key component of the customer intention embedding model.
Customer intention of an SEM ad is defined as the integrated purchase intention of the set of search queries leading to clicks of the ad's web page on search engine. For example, an ad has drawn clicks of customers after searching queries like ``apple phone 8 case'' or ``iphone 9 case'' has customer intention be case covers for various versions of iphone. If two ads share a large portion of clicked search queries, their customer intentions should be close to each other by definition. Therefore, customer intention model is designed to reflect the co-click relationships among the SEM ads. 

\textbf{Interactive metric}. We propose a measure named as interactive metric(IM) in order to calibrate the extent of similarity between customer intentions of two SEM ads. Formally speaking, given two SEM ads A1 and A2, we first find the numbers of clicks of the two ads on their co-clicked queries and denote them as $CLK_{(A1coA2)}$ and $CLK_{(A2coA1)}$. With the numbers of total historical clicks of the two ads $CLK_{A1}$ and $CLK_{A2}$, the IM value between A1 and A2 is defined as 
\begin{align}\label{eq:interactive-metric}
IM_{A1, A2} = \sqrt{\frac{CLK_{(A1coA2)} * CLK_{(A2coA1)}}{CLK_{A1} * CLK_{A2}}},
\end{align}

Fig \ref{fig:interactive_factor} provide a real-world example to further demonstrate how interactive factor of two SEM ads is calculated. In fig \ref{fig:interactive_factor}, Ad1 and Ad2 have share search queries ``walmart chairs'' and ``Chairs Walmart'', along with corresponding 192 and 158 clicks on the shared queries. Moreover, the total number of historical clicks for Ad1 and 2 are 192 and 172 respectively, leading to the IM value between the two ads be $\sqrt{(192 * 158)/(192 * 172)} = 0.958$.

\textbf{Contextual features of SEM ad}.
When a search query appears, the search engine will try to match it with the SEM ads according to the content of their landing pages. In light of that, we select the ads' website's text content as the main feature for the customer intention model, since the content should be a critical factor in customers' decision making. The text feature of an SEM ad is a combination of titles and descriptions of products contained in the ad's website. For the SEM ads with more than one product, we choose the three top products to constrain the length of the input feature. Once the features are extracted, they are processed and converted through the standard tokenization and padding procedures described in \cite{Devlin2019BERTPO}.

\textbf{Transformer-based customer intention representation model}.
Recently, the attention-based encode-decode structure transformer has become the status quo architecture for natural language processing tasks \cite{Vaswani2017AttentionIA}. Motivated by the structure of the bidirectional transformer from \cite{Devlin2019BERTPO}, we built a transformer-based deep learning\cite{li2021frequentnet} model for extracting the customer intention from the text features of SEM ads. 
As we show in Fig \ref{fig:intention_representation}, for a given ad $A$ and its tokenized feature $T_A$, the model will consecutively go through an initial embedding layer, $3$ transformer layers, a dense pooling layer, and two feedforward layers before generating the final $512$-dimension \emph{normalized} output vector.

\textbf{Training data}.
The data we use for training the representation learning model is the $\texttt{search\_term\_report}$ from search engine, which provides the historical statistics of interactions (e.g. clicks, impressions) between SEM ads and their relevant search queries. Specifically, for each SEM ad, we will extract historical click numbers between the ad and each search query that leads to the clicks. 
Together with interactive metric $I$ defined at \ref{eq:interactive-metric}, we create a data-set $\mathcal{D}$ containing all the tuples of SEM ads having co-clicked queries together with their interactive metric. 
In addition to the above positive instances, we need negative instances to cover larger support of the distribution. For that purpose, we sample a certain number of ad tuples without co-clicked queries and append the tuples onto the data-set $\mathcal{D}$ by assigning them with an interactive metric value of  $-1$.
For the best practice, the ratio between positive tuples and negative tuples should approximately equal to the average positive interactive metric in the feedback data. 

\begin{figure}[h]
  \centering
  \includegraphics[width=\linewidth]{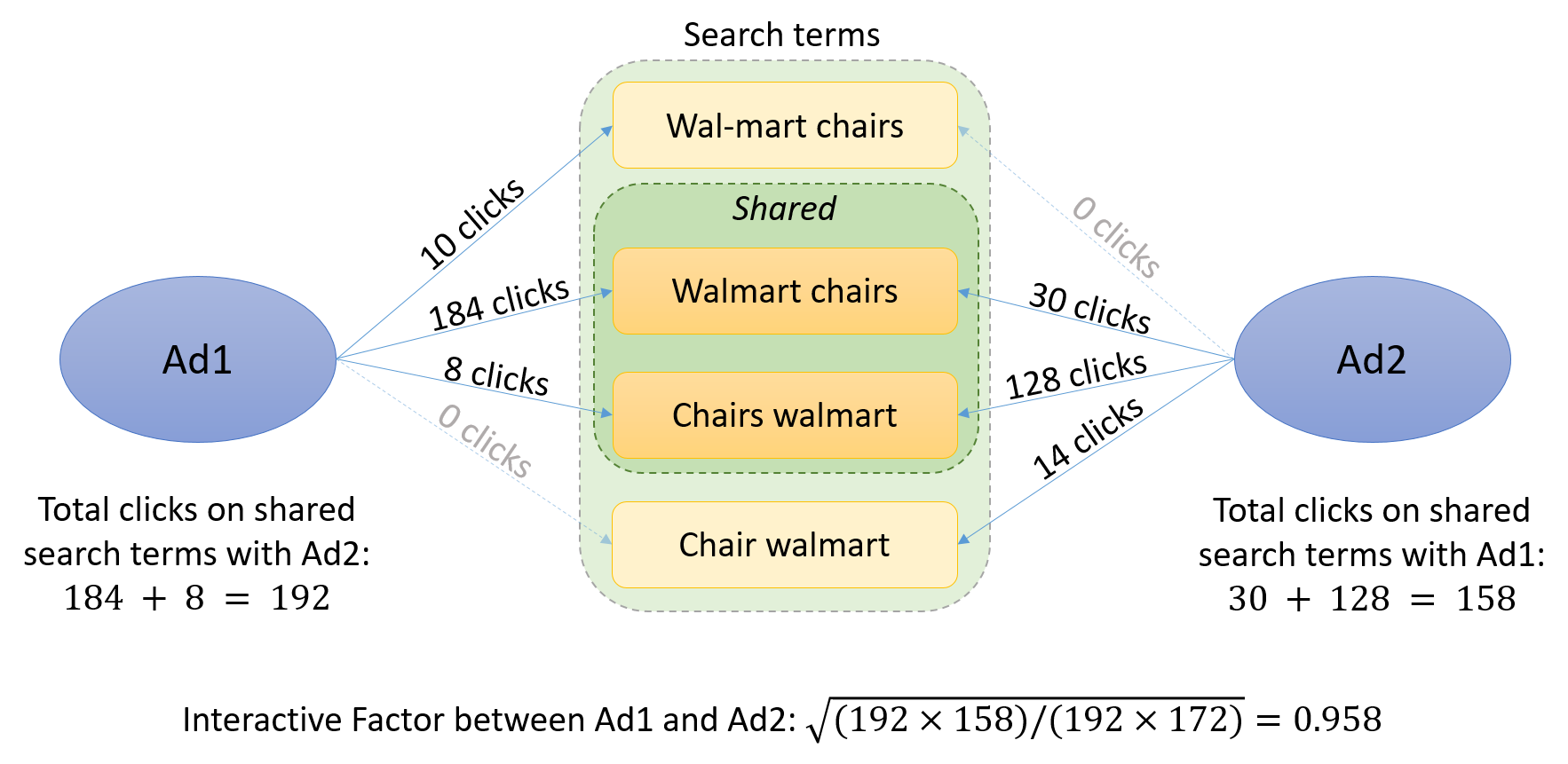}
  \caption{Interactive metric: an example}
   \label{fig:interactive_factor}
\end{figure}

\textbf{Model training}.
Let $f_\theta(\cdot)$ denote a customer intention model
with parameter vector $\theta$. Given an ad tuple $(A_i, A_j)$ along with their interactive metric $I_{ij}$, we define the loss function as
\begin{align}
-I_{ij}\log\sigma\big(f_\theta(T_{A_i})^Tf_\theta(T_{A_j})\big),
\end{align}
where $\sigma(\cdot)$ is the sigmoid function. The inner product of $f_\theta(T_{A_i})^Tf_\theta(T_{A_j})$ captures the cosine similarity between the embeddings of $(A_i, A_j)$, given that output vectors of the model $f_\theta(\cdot)$ are normalized. The structure of the model, together with the procedure for calculating the loss function, are presented in Figure \ref{fig:intention_representation}.
The optimization problem for finding the optimal $\theta $ is now given by: 
\begin{align}
\theta^{\star} = 
\argmin_{\theta \in \Theta}
\sum_{(A_i, A_j) \in DT} -I_{ij}\log\sigma(f_\theta(A_i)^Tf_\theta(A_j)), 
\label{eq:model-objective}
\end{align}
The objective (\ref{eq:model-objective}) indicates that the larger the interactive metric between two ads, the more impact this ad instance will carry when determining model parameter $\theta$. Including the negative instances will allow the model to further separate ads that lack a shared customer intention. Moreover, using negative samples can avoid over-fitting and the corner case where all SEM ads having a similar embedding. We use the ADAM\cite{Kingma2015AdamAM} optimizer, a variant of stochastic optimization \cite{Jie2018StochasticOI, Han2021ACP, Jie2018DecisionMU} for training (\ref{eq:model-objective}).

\begin{figure}[h]
  \centering
  \includegraphics[width=\linewidth]{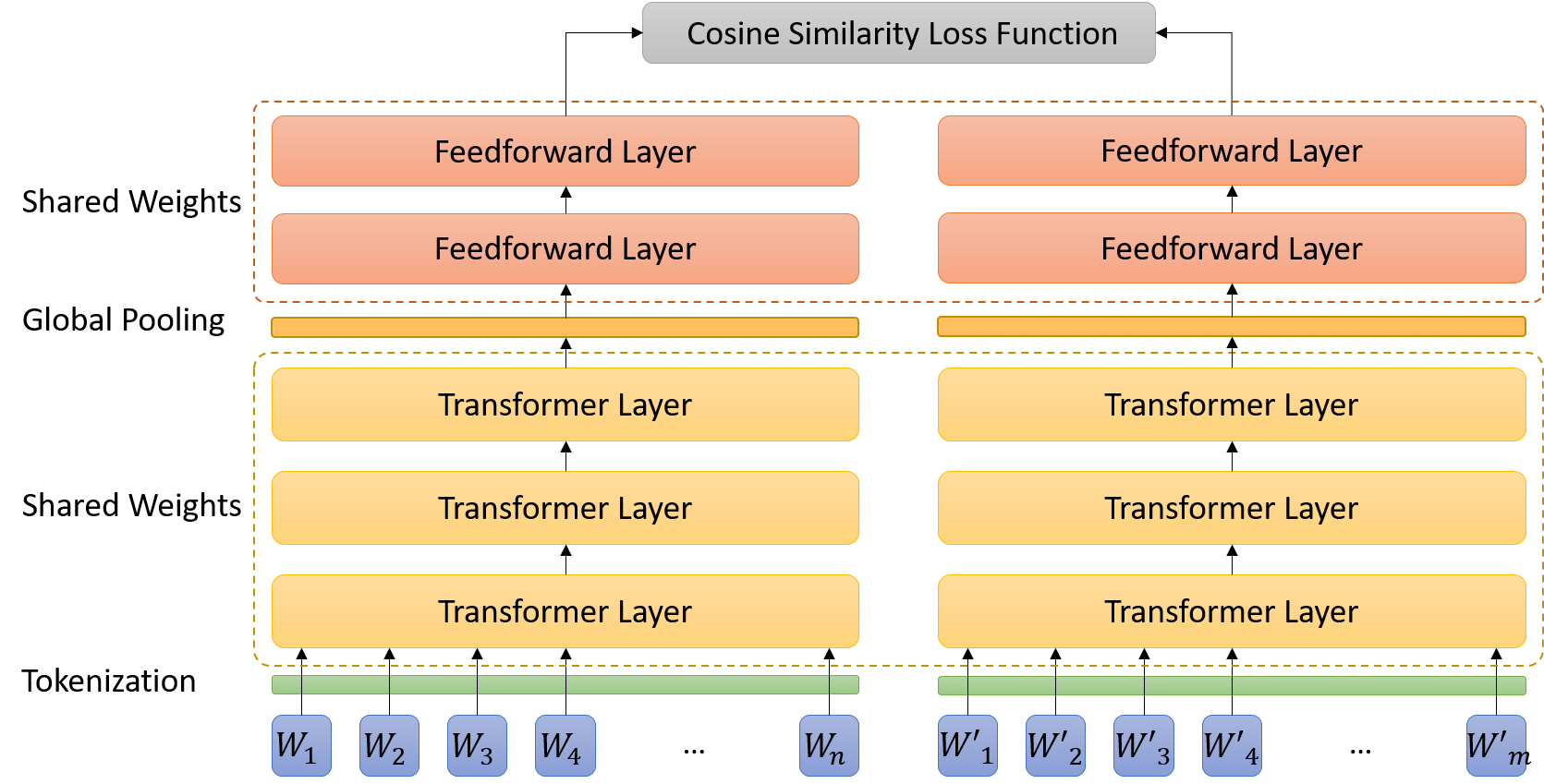}
  \caption{SEM ads customer intention embedding model}
   \label{fig:intention_representation}
\end{figure}

\subsection{Multi-stage SEM ads clustering algorithm}
In what follows, we discuss clustering with ads embedding. Due to the high volume of ads in modern SEM, it is impractical to apply the clustering algorithms that require computing all the pair-wise distances. Here, we present a multi-stage method that leverages the SEM ads' taxonomy and significantly reduces the computation demand.

\textbf{SEM ads classification}. The first step of the multi-stage clustering algorithm is to classify each SEM ad into one of the \emph{product types}, which can be any taxonomy that is labeled for the items: electronics, beverage, etc. Most companies have the predefined taxonomy for each item, which should be actively exploited. SEM ad with only one item can be directly concluded to its product type, and serve as the training sample of the taxonomy classification model. For SEM ads with more than one item, we train a feedforward neural network\cite{Du2020MultipleSK} to predict each ads' product type, which takes the embedding of the SEM ad as input. 
 
\textbf{Clustering within each product type}.
Following the classification, we apply the "bottom-up" Agglomerative clustering using embedding vectors as features to create mutually exclusive ad groups for the SEM ads within each product type. Naturally, the cosine distance is employed as the \emph{linkage metric}, and it also allows us to determine the threshold based on which the final clusters are formed. We point out that the first classification step significantly reduces the computation complexity compared with directly clustering all the ads. 

\section{Predicting RPC for SEM ads}
\label{section:performance_prediction}
In the next step, we build a machine learning model for each ads cluster to predict the key quantity of $RPC_g$, i.e. the revenue per click, whose role was illustrated in Section \ref{sec:sem-bidding-system}.

\textbf{Features}. The features we use for predicting $RPC$ can be categorized into three classes: 1. the historical feedback statistics such as clicks and conversions; 2. the activity metric for the ad's landing pages such as bounce rate; 3. contextual features of the ad, such as aggregating the ads embedding for each ad group. 

\textbf{Model selection}.
There are varieties of machine learning models\cite{jarrow2021low, zhu2020adaptive} for predicting $RPC$. The most basic prediction model is linear regression, and its apparent advantages are interpret-ability and computational efficiency. However, linear regression model usually suffers from low performance especially on the SEM ad groups features with complex structures. Besides linear regression model, we also tried more complex alternatives such as boosted regression tree\cite{Friedman2001GreedyFA, ZHAO2018619} and neural network. It turned out that when applied on our tabular of features and responses($RPC$), neural network usually doesn't attain as high accuracy as boosted regression tree, according to our cross-validation performance. Moreover, with recent developments of highly efficient gradient boosting tree package such as \emph{XGboost} and \emph{LightGBM}, training an accurate gradient boosting tree algorithm is manageable on the scale of our SEM ads' features data. Tree explainers such as feature importance and SHAP value also assist us in understanding the impact of each feature in predicting $RPC$, although not as straightforward as linear regression model. As a result, taking all factors(accuracy, interpret-ability, robustness etc) altogether into consideration, we finally select \emph{Gradient Boosting Tree} as our $RPC$ prediction model $r$. In the next section, we will compare the prediction performances among linear regression, gradient boosting tree and neural network.

\textbf{Model training}
We choose the clicks-weighted square error as the loss function for model training because ad groups with higher clicks often have more impact on the business. 
Formally, by denoting the parameter of the model by $\eta\in\mathcal{H}$ and the total clicks of ad group by $C_g$, the objective function is given by: 
\begin{align}
\eta^\star = 
\argmin_{\eta \in \mathcal{H}}
\frac{\sum_{g \in \mathcal{G}} C_g(r_\eta(X_g) - RPC_g)^2}{\sum_{g \in \mathcal{G}}C_g},
\end{align}
where $r_{\eta}$ is the RPC predicitive model.

\section{Experiments and Analysis}
\label{section: experiment}
We conducted both offline and online experiments to answer the following questions:

\textbf{Q1:} Can ads clustering improve the RPC prediction accuracy by addressing the spareness of feedback data? 

\textbf{Q2:} Does the proposed two-step framework improve the business performance?

\subsection*{Offline experiment: prediction accuracy comparison}
The offline experiment is designed to test whether the proposed clustering methods address the sparseness issue and improve RPC prediction accuracy. To this end, we select a set of ads with a total number of  $\sim$20 million, and compare the accuracy of RPC predictions of 1. directly applying RPC prediction on each SEM ads (the baseline singular-ad-based algorithm); 2. clustering SEM ads before predicting RPC for each ad cluster (our cluster-based bidding algorithm). For fair comparison, we evaluate the performance metric based on each ad and set the predicted RPC of each ad equivalent to the predicted RPC of its belonging ad cluster when using the second approach. 
According to the operation protocol of Walmart, we predict the weekly RPC as described in Section \ref{section:performance_prediction}. For the proposed approach, we apply the methods introduced in Section \ref{section:embedding_clustering} to cluster SEM ads into ad groups, and aggregate the ad features within each ad group. The summary statistics for the ad groups and the original SEM ads are displayed in Table \ref{table:sem-ad-adgroups}. 

\begin{table}[hbt]
\centering
\caption{SEM ads vs ad groups: Data-set overview}
\resizebox{\linewidth}{!}{
\begin{tabular}{c|cc}
    \toprule
        & SEM ads & SEM ad groups \\ 
    \midrule
    Dataset sample size & 19.6 M & 1.8M \\ 
    Missing feature (proportion) & 91.6$\%$ & 54.4$\%$ \\
    Non-empty response ratio & 6.7 $\%$ & 36.4 $\%$ \\
    Relative response RPC variance & 100$\%$ & 55$\%$ \\
    \bottomrule
  \end{tabular}
  }
\label{table:sem-ad-adgroups}
\end{table}

In table \ref{table:sem-ad-adgroups}, the proportion of feature missingness is calculated based on the non-contextual features, and due to Walmart's privacy policy, the variances of the RPC response variable are presented as percentage proportions to the largest among the two datasets. Table \ref{table:sem-ad-adgroups} manifest the two benefits of ads clustering: 1. the feature sparseness is dramatically improved as exemplified by the reduced missing feature proportion, 2. the reduced variance of the response variable indicates that the clustering algorithm tends to produce a more robust output for the downstream $RPG$ modeling.

We experimented with three machine learning models for predicting the weekly RPC: \emph{linear regression} (LR) model, \emph{TabNet} and \emph{gradient boosting regression tree} (GBRT). We split the dataset into training, validation, and test by $80\%-10\%-10\%$, where the test dataset is used to report the predictive accuracy of the trained models. In addition to the click-weighted MSE (WMSE) mentioned at section \ref{section:performance_prediction}, we also include the click-weighted MAE (WMAE) as performance metric. The performances of the trained models are displayed on table \ref{table:prediction-accuracy-time}. Figure \ref{fig:relative_wmse} presents an example of the gradient boosting trees when applied to the baseline and our approach, under their best hyper-parameter combinations.
Due to the privacy policy, we provide the accuracy metric with respect to the baseline model, which is Linear regression(LR) on the singular-ad-based algorithm. The model training, including hyper-parameter tuning is conducted on a Linux system with 64 core 2.80GHz CPUs and 800 GB memory.  

\begin{figure}[h]
  \centering
  \includegraphics[width=0.7\linewidth]{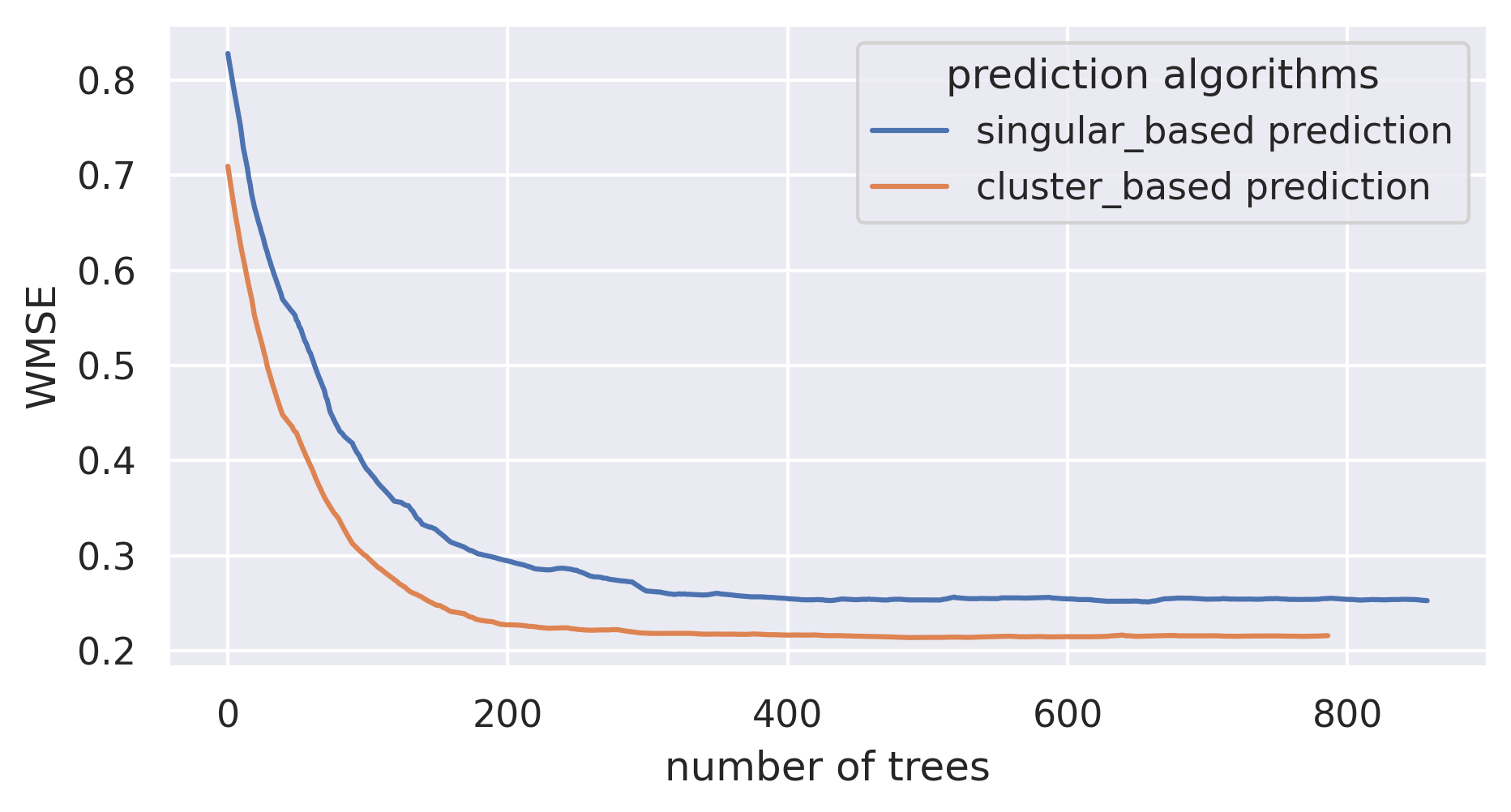}
  \caption{Relative WMSE of the baseline and our approach when using GBRT.}
  % \Description{A woman and a girl in white dresses sit in an open car.}
   \label{fig:relative_wmse}
\end{figure}

\begin{table*}
  \centering
  \caption{The RPC prediction accuracy (relative to LR on singular-ad setting), and the offline model training time.}
    \begin{tabular}{c|cc|cc|cc}
    \toprule
    \multirow{2}[2]{*}{Predictive model} & \multicolumn{2}{c|}{WMSE(Relative to LR Singular)} &
    \multicolumn{2}{c|}{WMAE(Relative to LR Singular)} &
    \multicolumn{2}{c}{Training time} \\
          & Singular based & \multicolumn{1}{c|}{Cluster based} & Singular based & \multicolumn{1}{c|}{Cluster based} & Singular based & Cluster based \\
    \midrule
    LR(reference point) & 100$\%$  & 92$\%$  & 100$\%$ & 86$\%$ & 8m & 2m \\
    TabNet & 30$\%$ & 27$\%$  & 35$\%$ & 31$\%$ & 15h  & 2.5h \\
    Gradient boosting & 25$\%$ & 21$\%$  & 29$\%$ & 23$\%$ & 16h & 2h \\
    \bottomrule
    \end{tabular}
  \label{table:prediction-accuracy-time}
\end{table*}

The results from Table \ref{table:prediction-accuracy-time} and Figure \ref{fig:relative_wmse} suggest that RPC prediction via ad clustering consistently achieves better performances compared with the singular ad prediction. Further, the computational time for training RPC at the cluster level is considerably less than the singular-ad level. 
\subsection*{Online experiment: business efficiency comparison} 
We design the online AB testing experiment to see whether the clustering-based bidding improves business performance, which is mainly reflected by the revenue per spend (RPS). We use the stratified sampling to select 200,000 SEM ads across different product types as our target ads pool, and compare the RPS of clustering-based bidding algorithm and traditional singular-ad-based bidding algorithm when applied on the selected SEM ads. 

Here, we leverage the Draft $\&$ Experiment platform from Google Adwords\footnote{https://ads.google.com, where the max capacity for a campaign is 200,000.} to create a pair of control and test campaigns that host the 200k SEM ads. The singular-based and clustering-based bidding algorithms are applied to the control and test campaign, respectively. We set a common RPS goal for the two algorithms to evaluate bids. 

\begin{table}
\caption{Online AB testing results.}
\begin{tabular}{cc|cc}
    \toprule
    Test  & Metric(Relative to control) & Control & Test \\ 
    \midrule
    AA period& Spend & 100$\%$ & 101$\%$ \\ 
     & RPS & 100$\%$ & 99$\%$ \\
    AB period& Spend & 100$\%$ & 102$\%$ \\
     & RPS & 100$\%$ & 109$\%$ \\
     \bottomrule
  \end{tabular}
\label{table:online-experiment-results}
\end{table}
%The control and test campaigns are launched simultaneously, 
%and during the test, Google evenly split incoming traffic to ensure a fair comparison. 
The experiment session consists of one week of AA test, and one following week of AB test. During the \textit{AB} period, we keep the spending between control and test campaigns close through some proportional bids adjustments. 

The online testing results are presented in table \ref{table:online-experiment-results}, where we present metrics relative to the control campaign. The test campaign's RPS on the AB period exemplifies that clustering-based bidding is able to achieve improved performance compared with the singular-ad-based-bidding. 
%To further justify our conclusion, we perform a paired \emph{t test} on the RPS of two campaigns, which shows that our experiment reached a t-statistics of 2.1 with the p-value of 0.02.   
\section{Conclusion}
This paper introduces a two-step clustering-based SEM bidding system that integrates modern representation learning with the Transformer language model. We describe the detailed development infrastructure that may bring insights to both practitioners and researchers in this domain. The offline and online experiments show that the proposed system compares favorably to the alternatives in terms of accuracy and training efficiency. 
Our successful deployment for Walmart e-commerce further reveals combining clustering with modern representation learning as a scalable solution for industrial bidding systems. 

\bibliographystyle{ACM-Reference-Format}
\bibliography{irs_workshop_arxiv}

%%% -*-BibTeX-*-
%%% Do NOT edit. File created by BibTeX with style
%%% ACM-Reference-Format-Journals [18-Jan-2012].

\begin{thebibliography}{26}

%%% ====================================================================
%%% NOTE TO THE USER: you can override these defaults by providing
%%% customized versions of any of these macros before the \bibliography
%%% command.  Each of them MUST provide its own final punctuation,
%%% except for \shownote{}, \showDOI{}, and \showURL{}.  The latter two
%%% do not use final punctuation, in order to avoid confusing it with
%%% the Web address.
%%%
%%% To suppress output of a particular field, define its macro to expand
%%% to an empty string, or better, \unskip, like this:
%%%
%%% \newcommand{\showDOI}[1]{\unskip}   % LaTeX syntax
%%%
%%% \def \showDOI #1{\unskip}           % plain TeX syntax
%%%
%%% ====================================================================

\ifx \showCODEN    \undefined \def \showCODEN     #1{\unskip}     \fi
\ifx \showDOI      \undefined \def \showDOI       #1{#1}\fi
\ifx \showISBNx    \undefined \def \showISBNx     #1{\unskip}     \fi
\ifx \showISBNxiii \undefined \def \showISBNxiii  #1{\unskip}     \fi
\ifx \showISSN     \undefined \def \showISSN      #1{\unskip}     \fi
\ifx \showLCCN     \undefined \def \showLCCN      #1{\unskip}     \fi
\ifx \shownote     \undefined \def \shownote      #1{#1}          \fi
\ifx \showarticletitle \undefined \def \showarticletitle #1{#1}   \fi
\ifx \showURL      \undefined \def \showURL       {\relax}        \fi
% The following commands are used for tagged output and should be
% invisible to TeX
\providecommand\bibfield[2]{#2}
\providecommand\bibinfo[2]{#2}
\providecommand\natexlab[1]{#1}
\providecommand\showeprint[2][]{arXiv:#2}

\bibitem[\protect\citeauthoryear{Abhishek and Hosanagar}{Abhishek and
  Hosanagar}{2012}]%
        {Abhishek2012OptimalBI}
\bibfield{author}{\bibinfo{person}{V. Abhishek} {and} \bibinfo{person}{K.
  Hosanagar}.} \bibinfo{year}{2012}\natexlab{}.
\newblock \showarticletitle{Optimal bidding in multi-item multi-slot sponsored
  search auctions}. In \bibinfo{booktitle}{\emph{EC '12}}.
\newblock


\bibitem[\protect\citeauthoryear{Borgs, Chayes, Immorlica, Jain, Etesami, and
  Mahdian}{Borgs et~al\mbox{.}}{2007}]%
        {Borgs2007DynamicsOB}
\bibfield{author}{\bibinfo{person}{C. Borgs}, \bibinfo{person}{J. Chayes},
  \bibinfo{person}{Nicole Immorlica}, \bibinfo{person}{K. Jain},
  \bibinfo{person}{O. Etesami}, {and} \bibinfo{person}{Mohammad Mahdian}.}
  \bibinfo{year}{2007}\natexlab{}.
\newblock \showarticletitle{Dynamics of bid optimization in online
  advertisement auctions}. In \bibinfo{booktitle}{\emph{WWW '07}}.
\newblock


\bibitem[\protect\citeauthoryear{Chen, Liu, Yi, Schwaighofer, and Yan}{Chen
  et~al\mbox{.}}{2013}]%
        {Chen2013QueryCB}
\bibfield{author}{\bibinfo{person}{Y. Chen}, \bibinfo{person}{W. Liu},
  \bibinfo{person}{J. Yi}, \bibinfo{person}{Anton Schwaighofer}, {and}
  \bibinfo{person}{T.~W. Yan}.} \bibinfo{year}{2013}\natexlab{}.
\newblock \showarticletitle{Query clustering based on bid landscape for
  sponsored search auction optimization}.
\newblock \bibinfo{journal}{\emph{Proceedings of the 19th ACM SIGKDD
  international conference on Knowledge discovery and data mining}}
  (\bibinfo{year}{2013}).
\newblock


\bibitem[\protect\citeauthoryear{Devlin, Chang, Lee, and Toutanova}{Devlin
  et~al\mbox{.}}{2019}]%
        {Devlin2019BERTPO}
\bibfield{author}{\bibinfo{person}{J. Devlin}, \bibinfo{person}{Ming-Wei
  Chang}, \bibinfo{person}{Kenton Lee}, {and} \bibinfo{person}{Kristina
  Toutanova}.} \bibinfo{year}{2019}\natexlab{}.
\newblock \showarticletitle{BERT: Pre-training of Deep Bidirectional
  Transformers for Language Understanding}. In
  \bibinfo{booktitle}{\emph{NAACL-HLT}}.
\newblock


\bibitem[\protect\citeauthoryear{Du, Zhang, Shi, and Chen}{Du
  et~al\mbox{.}}{2020}]%
        {Du2020MultipleSK}
\bibfield{author}{\bibinfo{person}{Tianming Du}, \bibinfo{person}{Yanci Zhang},
  \bibinfo{person}{Xiaotong Shi}, {and} \bibinfo{person}{Shuang Chen}.}
  \bibinfo{year}{2020}\natexlab{}.
\newblock \showarticletitle{Multiple Slice k-space Deep Learning for Magnetic
  Resonance Imaging Reconstruction}.
\newblock \bibinfo{journal}{\emph{2020 42nd Annual International Conference of
  the IEEE Engineering in Medicine \& Biology Society (EMBC)}}
  (\bibinfo{year}{2020}), \bibinfo{pages}{1564--1567}.
\newblock


\bibitem[\protect\citeauthoryear{Feldman, Muthukrishnan, P{\'a}l, and
  Stein}{Feldman et~al\mbox{.}}{2007}]%
        {Feldman2007BudgetOI}
\bibfield{author}{\bibinfo{person}{J. Feldman}, \bibinfo{person}{S.
  Muthukrishnan}, \bibinfo{person}{Martin P{\'a}l}, {and} \bibinfo{person}{C.
  Stein}.} \bibinfo{year}{2007}\natexlab{}.
\newblock \showarticletitle{Budget optimization in search-based advertising
  auctions}.
\newblock \bibinfo{journal}{\emph{ArXiv}}  \bibinfo{volume}{abs/cs/0612052}
  (\bibinfo{year}{2007}).
\newblock


\bibitem[\protect\citeauthoryear{Friedman}{Friedman}{2001}]%
        {Friedman2001GreedyFA}
\bibfield{author}{\bibinfo{person}{J. Friedman}.}
  \bibinfo{year}{2001}\natexlab{}.
\newblock \showarticletitle{Greedy function approximation: A gradient boosting
  machine.}
\newblock \bibinfo{journal}{\emph{Annals of Statistics}}  \bibinfo{volume}{29}
  (\bibinfo{year}{2001}), \bibinfo{pages}{1189--1232}.
\newblock


\bibitem[\protect\citeauthoryear{Han, Mazouchi, Nageshrao, and Modares}{Han
  et~al\mbox{.}}{2021}]%
        {Han2021ACP}
\bibfield{author}{\bibinfo{person}{Yuzhen Han}, \bibinfo{person}{Majid
  Mazouchi}, \bibinfo{person}{S. Nageshrao}, {and} \bibinfo{person}{H.
  Modares}.} \bibinfo{year}{2021}\natexlab{}.
\newblock \showarticletitle{A Convex Programming Approach to Data-Driven
  Risk-Averse Reinforcement Learning}.
\newblock \bibinfo{journal}{\emph{ArXiv}}  \bibinfo{volume}{abs/2103.14606}
  (\bibinfo{year}{2021}).
\newblock


\bibitem[\protect\citeauthoryear{Hillard, Manavoglu, Raghavan, Leggetter,
  Cant{\'u}-Paz, and Iyer}{Hillard et~al\mbox{.}}{2010}]%
        {Hillard2010TheSO}
\bibfield{author}{\bibinfo{person}{D. Hillard}, \bibinfo{person}{Eren
  Manavoglu}, \bibinfo{person}{H. Raghavan}, \bibinfo{person}{C. Leggetter},
  \bibinfo{person}{E. Cant{\'u}-Paz}, {and} \bibinfo{person}{R. Iyer}.}
  \bibinfo{year}{2010}\natexlab{}.
\newblock \showarticletitle{The sum of its parts: reducing sparsity in click
  estimation with query segments}.
\newblock \bibinfo{journal}{\emph{Information Retrieval}}  \bibinfo{volume}{14}
  (\bibinfo{year}{2010}), \bibinfo{pages}{315--336}.
\newblock


\bibitem[\protect\citeauthoryear{Jarrow, Murataj, Wells, and Zhu}{Jarrow
  et~al\mbox{.}}{2021}]%
        {jarrow2021low}
\bibfield{author}{\bibinfo{person}{Robert~A Jarrow}, \bibinfo{person}{Rinald
  Murataj}, \bibinfo{person}{Martin~T Wells}, {and} \bibinfo{person}{Liao
  Zhu}.} \bibinfo{year}{2021}\natexlab{}.
\newblock \showarticletitle{The Low-volatility Anomaly and the Adaptive
  Multi-Factor Model}.
\newblock \bibinfo{journal}{\emph{arXiv preprint arXiv:2003.08302}}
  (\bibinfo{year}{2021}).
\newblock


\bibitem[\protect\citeauthoryear{Jie}{Jie}{2018}]%
        {Jie2018DecisionMU}
\bibfield{author}{\bibinfo{person}{Cheng Jie}.}
  \bibinfo{year}{2018}\natexlab{}.
\newblock \showarticletitle{Decision Making Under Uncertainty: New Models and
  Applications}.
\newblock


\bibitem[\protect\citeauthoryear{Jie, PrashanthL., Fu, Marcus, and
  Szepesvari}{Jie et~al\mbox{.}}{2018}]%
        {Jie2018StochasticOI}
\bibfield{author}{\bibinfo{person}{Cheng Jie}, \bibinfo{person}{A.
  PrashanthL.}, \bibinfo{person}{M. Fu}, \bibinfo{person}{S. Marcus}, {and}
  \bibinfo{person}{Csaba Szepesvari}.} \bibinfo{year}{2018}\natexlab{}.
\newblock \showarticletitle{Stochastic Optimization in a Cumulative Prospect
  Theory Framework}.
\newblock \bibinfo{journal}{\emph{IEEE Trans. Automat. Control}}
  \bibinfo{volume}{63} (\bibinfo{year}{2018}), \bibinfo{pages}{2867--2882}.
\newblock


\bibitem[\protect\citeauthoryear{Kingma and Ba}{Kingma and Ba}{2015}]%
        {Kingma2015AdamAM}
\bibfield{author}{\bibinfo{person}{Diederik~P. Kingma} {and}
  \bibinfo{person}{Jimmy Ba}.} \bibinfo{year}{2015}\natexlab{}.
\newblock \showarticletitle{Adam: A Method for Stochastic Optimization}.
\newblock \bibinfo{journal}{\emph{CoRR}}  \bibinfo{volume}{abs/1412.6980}
  (\bibinfo{year}{2015}).
\newblock


\bibitem[\protect\citeauthoryear{Li, Song, Sun, and Zhu}{Li
  et~al\mbox{.}}{2021}]%
        {li2021frequentnet}
\bibfield{author}{\bibinfo{person}{Yifei Li}, \bibinfo{person}{Kuangyan Song},
  \bibinfo{person}{Yiming Sun}, {and} \bibinfo{person}{Liao Zhu}.}
  \bibinfo{year}{2021}\natexlab{}.
\newblock \showarticletitle{FrequentNet: A Novel Interpretable Deep Learning
  Model for Image Classification}.
\newblock \bibinfo{journal}{\emph{Available at SSRN:
  https://ssrn.com/abstract=3895462}} (\bibinfo{year}{2021}).
\newblock


\bibitem[\protect\citeauthoryear{Lin, Jie, and Marcus}{Lin
  et~al\mbox{.}}{2018}]%
        {LIN20181}
\bibfield{author}{\bibinfo{person}{Kun Lin}, \bibinfo{person}{Cheng Jie}, {and}
  \bibinfo{person}{Steven~I. Marcus}.} \bibinfo{year}{2018}\natexlab{}.
\newblock \showarticletitle{Probabilistically distorted risk-sensitive
  infinite-horizon dynamic programming}.
\newblock \bibinfo{journal}{\emph{Automatica}}  \bibinfo{volume}{97}
  (\bibinfo{year}{2018}), \bibinfo{pages}{1--6}.
\newblock
\showISSN{0005-1098}
\urldef\tempurl%
\url{https://doi.org/10.1016/j.automatica.2018.07.028}
\showDOI{\tempurl}


\bibitem[\protect\citeauthoryear{Mohammad~Mahdian}{Mohammad~Mahdian}{2009}]%
        {Mohammad2009clustering}
\bibfield{author}{\bibinfo{person}{Grant~Wang Mohammad~Mahdian}.}
  \bibinfo{year}{2009}\natexlab{}.
\newblock \showarticletitle{Clustering-Based Bidding languages for Sponsored
  Search}.
\newblock \bibinfo{journal}{\emph{European Symposium on Algorithms}}
  (\bibinfo{year}{2009}).
\newblock


\bibitem[\protect\citeauthoryear{Prashanth, Jie, Fu, Marcus, and
  Szepesv{\'a}ri}{Prashanth et~al\mbox{.}}{2016}]%
        {la2016cumulative}
\bibfield{author}{\bibinfo{person}{L.A. Prashanth}, \bibinfo{person}{Cheng
  Jie}, \bibinfo{person}{Michael Fu}, \bibinfo{person}{Steve Marcus}, {and}
  \bibinfo{person}{Csaba Szepesv{\'a}ri}.} \bibinfo{year}{2016}\natexlab{}.
\newblock \showarticletitle{Cumulative Prospect Theory Meets Reinforcement
  Learning: Prediction and Control}. In \bibinfo{booktitle}{\emph{Proceedings
  of The 33rd International Conference on Machine Learning}}.
  \bibinfo{pages}{1406--1415}.
\newblock


\bibitem[\protect\citeauthoryear{Shen, Peng, Liu, Zhang, Qian, Hong, Guo, Ding,
  Lu, and Tang}{Shen et~al\mbox{.}}{2020}]%
        {Shen2020ReinforcementMD}
\bibfield{author}{\bibinfo{person}{W. Shen}, \bibinfo{person}{Binghui Peng},
  \bibinfo{person}{Hanpeng Liu}, \bibinfo{person}{Michael Zhang},
  \bibinfo{person}{Ruohan Qian}, \bibinfo{person}{Y. Hong}, \bibinfo{person}{Z.
  Guo}, \bibinfo{person}{Zongyao Ding}, \bibinfo{person}{Pengjun Lu}, {and}
  \bibinfo{person}{Pingzhong Tang}.} \bibinfo{year}{2020}\natexlab{}.
\newblock \showarticletitle{Reinforcement Mechanism Design, with Applications
  to Dynamic Pricing in Sponsored Search Auctions}.
\newblock \bibinfo{journal}{\emph{ArXiv}}  \bibinfo{volume}{abs/1711.10279}
  (\bibinfo{year}{2020}).
\newblock


\bibitem[\protect\citeauthoryear{Vaswani, Shazeer, Parmar, Uszkoreit, Jones,
  Gomez, Kaiser, and Polosukhin}{Vaswani et~al\mbox{.}}{2017}]%
        {Vaswani2017AttentionIA}
\bibfield{author}{\bibinfo{person}{Ashish Vaswani}, \bibinfo{person}{Noam~M.
  Shazeer}, \bibinfo{person}{Niki Parmar}, \bibinfo{person}{Jakob Uszkoreit},
  \bibinfo{person}{Llion Jones}, \bibinfo{person}{Aidan~N. Gomez},
  \bibinfo{person}{Lukasz Kaiser}, {and} \bibinfo{person}{Illia Polosukhin}.}
  \bibinfo{year}{2017}\natexlab{}.
\newblock \showarticletitle{Attention is All you Need}.
\newblock \bibinfo{journal}{\emph{ArXiv}}  \bibinfo{volume}{abs/1706.03762}
  (\bibinfo{year}{2017}).
\newblock


\bibitem[\protect\citeauthoryear{Zhang, Du, Sun, Donohue, and Dai}{Zhang
  et~al\mbox{.}}{2021}]%
        {Zhang2021Form1I}
\bibfield{author}{\bibinfo{person}{Yanci Zhang}, \bibinfo{person}{Tianming Du},
  \bibinfo{person}{Yujie Sun}, \bibinfo{person}{Lawrence Donohue}, {and}
  \bibinfo{person}{Rui Dai}.} \bibinfo{year}{2021}\natexlab{}.
\newblock \showarticletitle{Form 10-Q Itemization}.
\newblock \bibinfo{journal}{\emph{ArXiv}}  \bibinfo{volume}{abs/2104.11783}
  (\bibinfo{year}{2021}).
\newblock


\bibitem[\protect\citeauthoryear{Zhao, Zhan, and Jie}{Zhao
  et~al\mbox{.}}{2018}]%
        {ZHAO2018619}
\bibfield{author}{\bibinfo{person}{Xinyan Zhao}, \bibinfo{person}{Mengqi Zhan},
  {and} \bibinfo{person}{Cheng Jie}.} \bibinfo{year}{2018}\natexlab{}.
\newblock \showarticletitle{Examining multiplicity and dynamics of publics’
  crisis narratives with large-scale Twitter data}.
\newblock \bibinfo{journal}{\emph{Public Relations Review}}
  \bibinfo{volume}{44}, \bibinfo{number}{4} (\bibinfo{year}{2018}),
  \bibinfo{pages}{619--632}.
\newblock
\showISSN{0363-8111}
\urldef\tempurl%
\url{https://doi.org/10.1016/j.pubrev.2018.07.004}
\showDOI{\tempurl}


\bibitem[\protect\citeauthoryear{Zhu}{Zhu}{2020}]%
        {zhu2020adaptive}
\bibfield{author}{\bibinfo{person}{Liao Zhu}.} \bibinfo{year}{2020}\natexlab{}.
\newblock \emph{\bibinfo{title}{The Adaptive Multi-Factor Model and the
  Financial Market}}.
\newblock \bibinfo{thesistype}{Ph.D. Dissertation}. \bibinfo{school}{Cornell
  University}.
\newblock


\bibitem[\protect\citeauthoryear{Zhu, Basu, Jarrow, and Wells}{Zhu
  et~al\mbox{.}}{2020}]%
        {zhu2020high}
\bibfield{author}{\bibinfo{person}{Liao Zhu}, \bibinfo{person}{Sumanta Basu},
  \bibinfo{person}{Robert~A. Jarrow}, {and} \bibinfo{person}{Martin~T. Wells}.}
  \bibinfo{year}{2020}\natexlab{}.
\newblock \showarticletitle{High-Dimensional Estimation, Basis Assets, and the
  Adaptive Multi-Factor Model}.
\newblock \bibinfo{journal}{\emph{Quarterly Journal of Finance}}
  \bibinfo{volume}{10}, \bibinfo{number}{04} (\bibinfo{year}{2020}),
  \bibinfo{pages}{2050017}.
\newblock


\bibitem[\protect\citeauthoryear{Zhu, Jarrow, and Wells}{Zhu
  et~al\mbox{.}}{2021a}]%
        {zhu2021time}
\bibfield{author}{\bibinfo{person}{Liao Zhu}, \bibinfo{person}{Robert~A.
  Jarrow}, {and} \bibinfo{person}{Martin~T. Wells}.}
  \bibinfo{year}{2021}\natexlab{a}.
\newblock \showarticletitle{Time-Invariance Coefficients Tests with the
  Adaptive Multi-Factor Model}.
\newblock \bibinfo{journal}{\emph{arXiv preprint arXiv:2011.04171}}
  (\bibinfo{year}{2021}).
\newblock


\bibitem[\protect\citeauthoryear{Zhu, Sun, and Wells}{Zhu
  et~al\mbox{.}}{2021b}]%
        {zhu2021clustering}
\bibfield{author}{\bibinfo{person}{Liao Zhu}, \bibinfo{person}{Ningning Sun},
  {and} \bibinfo{person}{Martin~T. Wells}.} \bibinfo{year}{2021}\natexlab{b}.
\newblock \showarticletitle{Clustering Structure of Microstructure Measures}.
\newblock \bibinfo{journal}{\emph{arXiv preprint arXiv:2107.02283}}
  (\bibinfo{year}{2021}).
\newblock


\bibitem[\protect\citeauthoryear{Zhu, Wu, and Wells}{Zhu
  et~al\mbox{.}}{2021c}]%
        {zhu2021news}
\bibfield{author}{\bibinfo{person}{Liao Zhu}, \bibinfo{person}{Haoxuan Wu},
  {and} \bibinfo{person}{Martin~T. Wells}.} \bibinfo{year}{2021}\natexlab{c}.
\newblock \showarticletitle{A News-based Machine Learning Model for Adaptive
  Asset Pricing}.
\newblock \bibinfo{journal}{\emph{arXiv preprint arXiv:2106.07103}}
  (\bibinfo{year}{2021}).
\newblock


\end{thebibliography}

\end{document}